\newcommand{\switchoff}{s}
\newcommand{\designer}{\ensuremath{\mathbf{D}}}
\newcommand{\rstar}{\ensuremath{U_{\aic}}}
\newcommand{\advsd}{{\rstar}}
\newcommand{\aic}{\ensuremath{a}}
\newcommand{\anf}{\ensuremath{\switchoff}}
\newcommand{\acf}{\ensuremath{w(\aic)}}
\newcommand{\asd}{\ensuremath{\switchoff}}
\newcommand{\ansd}{\ensuremath{\lnot \switchoff}}
\newcommand{\psd}{\ensuremath{\pi^{\principal}}}
\newcommand{\dpsd}{\ensuremath{\dot{\pi}^{\principal}}}
\newcommand{\correct}{\ensuremath{C}}
\newcommand{\edpsd}{\ensuremath{\expect [\dpsd]}}
\newcommand{\advcf}{\ensuremath{\Delta}}
\newcommand{\secref}[1]{Section~\ref{#1}}
\newcommand{\eqnref}[1]{Equation~\ref{#1}}
\newcommand{\figref}[1]{Figure~\ref{#1}}
\newcommand{\thmref}[1]{Theorem~\ref{#1}}
\newcommand{\corref}[1]{Corollary~\ref{#1}}
\newtheorem{thm}{Theorem}
\newtheorem{cor}{Corollary}
\newtheorem{rem}{Remark}
\DeclareMathOperator*{\expect}{\mathbb{E}}
\newcommand{\principal}{\ensuremath{\mathbf{H}}}
\newcommand{\agent}{\ensuremath{\mathbf{R}}}
\newcommand{\Pstrat}{\ensuremath{\pi^\principal}}
\newcommand{\agentbel}{\ensuremath{B^{\agent}}}
\title{The Off-Switch Game}
\author{  Dylan Hadfield-Menell$^{1}$ \and Anca Dragan$^{1}$ \and Pieter Abbeel$^{1,2,3}$ \and Stuart Russell$^{1}$ \\ 
$^{1}$University of California, Berkeley, $^{2}$OpenAI, $^{3}$International Computer Science Institute (ICSI)\\
\{dhm, anca, pabbeel, russell\}@cs.berkeley.edu}
\begin{document}

\maketitle
\begin{abstract}
It is clear that one of the primary tools we can use to mitigate the potential risk from a misbehaving AI system is the ability to turn the system off. As the capabilities of AI systems improve, it is important to ensure that such systems do not adopt subgoals that prevent a human from switching them off. This is a challenge because many formulations of rational agents create strong incentives for self-preservation. This is not caused by a built-in instinct, but because a rational agent will maximize expected utility and cannot achieve whatever objective it has been given if it is dead. 
Our goal is to study the incentives an agent has to allow itself to be switched off. We analyze a simple game between a human H and a robot R, where H can press R's off switch but R can disable the off switch. A traditional agent takes its reward function for granted: we show that such agents have an incentive to disable the
off switch, except in the special case where H is perfectly
rational. Our key insight is that for R to want to preserve its off switch, it needs to be uncertain about the utility associated with the outcome, and to treat H's actions as important observations about that utility. (R also has no incentive
to switch {\em itself} off in this setting.) We conclude that giving
machines an appropriate level of uncertainty about their objectives
leads to safer designs, and we argue that this setting is a useful
generalization of the classical AI paradigm of rational agents.

\end{abstract}

\section{Introduction}
\label{sec-intro}
From the 150-plus years of debate concerning potential risks from
misbehaving AI systems, one thread has emerged that
provides a potentially plausible source of problems: the inadvertent
misalignment of objectives between machines and people.
Alan Turing, in a 1951 radio address, felt it necessary to point out the challenge inherent to controlling an artificial agent with superhuman intelligence: {\em ``If a machine can think, it might think more intelligently than we do, and then where should we be? Even if we could keep the machines in a subservient position, for instance by turning off the power
at strategic moments, we should, as a species, feel greatly humbled. ...
[T]his new danger  is certainly something which can give us anxiety}~\cite{Turing:1951}.''

\begin{figure}
    \centering
    \includegraphics[width=0.5\columnwidth]{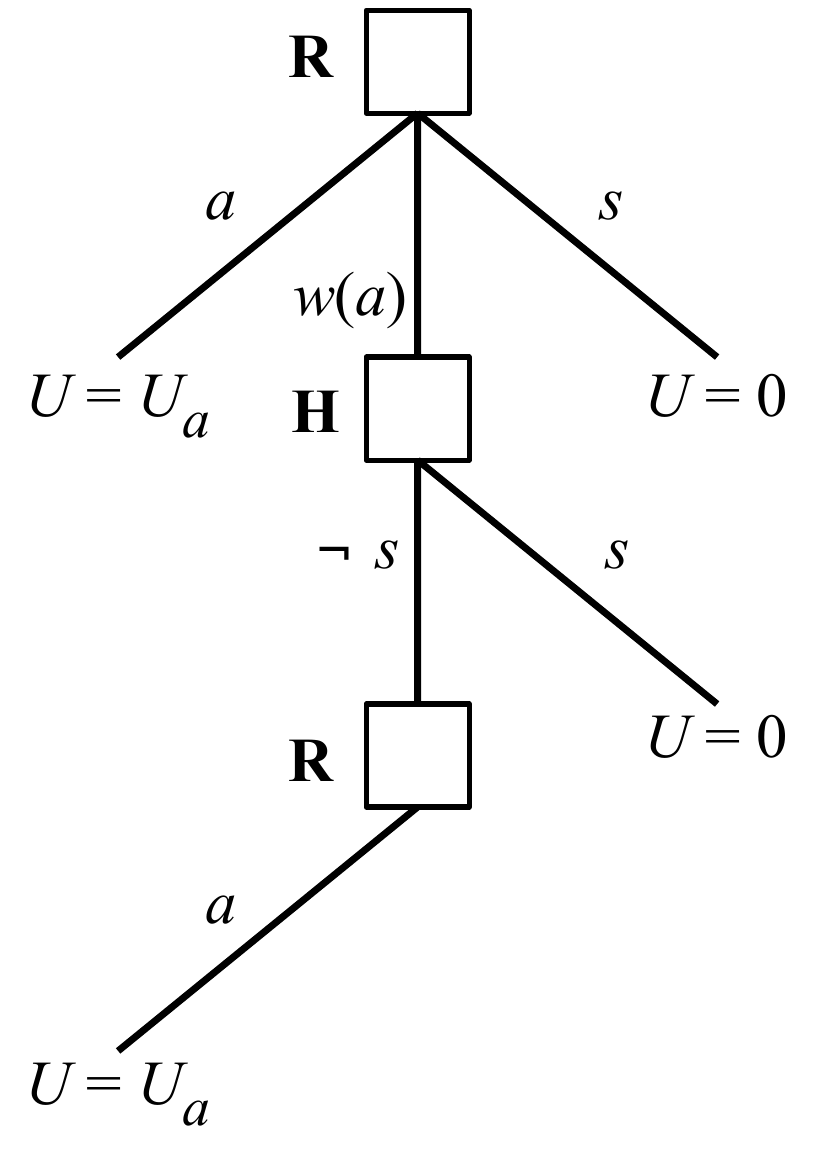}

    \caption{{The structure of the off-switch game. Squares indicate decision nodes for the robot \agent{} or the human \principal.}}
\label{fig-state-space}
\end{figure}


There has been recent debate about the validity of this concern, so far, largely relying on
informal arguments. One important question is how difficult it is to implement Turing's idea of `turning off the power at strategic moments', i.e., switching a misbehaving agent off\footnote{see, e.g., comments in \cite{ITIF2015}.}. For example, some have argued that there is no reason for an AI to resist being switched off unless it is explicitly programmed with a self-preservation incentive~\cite{delPrado2015}.
\cite{omohundro2008basic}, on the other hand, points
out that self-preservation is likely to be an \emph{instrumental goal} for
a robot, i.e., a subgoal that is essential to
successful completion of the original objective. Thus, even if the
robot is, all other things being equal, {\em completely indifferent}
between life and death, it must still avoid death if death would
prevent goal achievement. Or, as \cite{Russell:2016} puts it, you can't fetch the coffee if you're dead. This suggests that an intelligent system has an incentive to take actions that are analogous to `disabling an off switch' to reduce the possibility of failure; switching off an advanced AI system may be no easier than, say, beating AlphaGo at Go.


To explore the validity of these informal arguments, we need to define
a formal decision problem for the robot and examine the solutions,
varying the problem structure and parameters to see how they affect
the behaviors. We model this problem as a game between a human and a robot. The robot has an off switch that the human can press, but the robot also has the ability to disable its off switch. Our model is similar in spirit to the \emph{shutdown problem} introduced in \cite{soares2015corrigibility}. They considered the problem of augmenting a given utility function so that the agent would allow itself to be switched off, but would not affect behavior otherwise. They find that, at best the robot can be made indifferent between disabling its off switch and switching \emph{itself} off.  


In this paper, we propose and analyze an alternative formulation of this problem that models two key properties.  First, the robot should understand that it is maximizing value \emph{for} the human. This allows the model to distinguish between being switched off by a (non-random) human and being switched off by, say, (random) lightning. Second, the robot should not assume that it knows how to perfectly measure value for the human. This means that the model
should directly account for \emph{uncertainty} about the ``true''
objective and that the robot should treat observations of human behavior, e.g., pressing an off switch, as evidence about what the true objective is.


In much of artificial intelligence research, we do not consider
uncertainty about the utility assigned to a state. It is well known
that an agent in a Markov decision process can ignore uncertainty
about the reward function: exactly the same behavior results if we
replace a distribution over reward functions with the expectation of
that distribution. These arguments rely on the assumption that it is
impossible for an agent to learn more about its reward function. Our
observation is that this assumption is fundamentally violated when we
consider an agent's off switch --- an agent that does not treat a
`switch-off' event as an observation that its utility estimate is
incorrect is likely to have an incentive for self-preservation or an
incentive to switch itself off.


In Section~\ref{sec-model},
following the general template provided by \cite{cirl16}, we model an
off switch as a simple game between a human {\principal} and a
robot {\agent}, where {\principal} can press {\agent}'s off switch but
{\agent} can disable it. {\agent} wants to maximize
{\principal}'s utility function, but is uncertain about what it is.
Sections~\ref{sec-analysis} and~\ref{sec-irrational} show very
generally that {\agent} now has a positive incentive {\em not} to
disable its off switch, provided {\principal} is not too
irrational. ({\agent} also has no incentive to switch {\em itself}
off.) The reason is simple: a rational {\principal} switches off
{\agent} iff that improves {\principal}'s utility, so {\agent}, whose
goal is to maximize {\principal}'s utility, is happy to be switched
off by {\principal}.  This is exactly analogous to the theorem of
non-negative expected value of information. 


We conclude that giving machines an appropriate level of uncertainty
about their objectives leads to safer designs, and that this setting
is a useful generalization of the classical AI paradigm of rational
agents~\cite{Russell+Norvig:2010}.


\section{The Off-Switch Game}
\label{sec-model}
In this section we propose a simple model, the off-switch game, that
captures the essence of one actor allowing itself to be switched off.
Like the Prisoner's Dilemma, it is an abstract model intended to stand
in for a wide variety of more complex scenarios: these scenarios might
have many humans, many machines, complex interactions over time,
communication, deception, physical coercion, and so on. We believe
that the core insights from the off-switch game, that agents should be
uncertain about the quality of their actions, generalize beyond this
simplification and argue for the study and design of agents with
uncertainty about their utility functions.

\subsection{Informal Description}

The off-switch game has two agents,
a ``human'' \principal{} and a ``robot'' \agent{}. (We could also  imagine these to represent, at the most extreme, the human race and the sum total of all AI systems.) A key assumption in our model is that
\principal{} can be described as acting, at least probabilistically, in accordance with a
{\em utility function}; that is, \principal{} is approximately rational. Crucially, as is the case when people give objectives to complex AI systems, we assume that \principal{} \emph{cannot} simply explicate or write down her utility function correctly.
Therefore, although \principal{} can communicate some utility information to \agent, it is reasonable to suppose that \agent{} begins the game with some residual {\em uncertainty}
about \principal{}'s utility function. Nonetheless, by design,
\agent{}'s goal is to optimize utility for \principal, even though \agent{} does not know exactly what that is.\footnote{One might suppose that 
if \agent{} {\em does} know \principal{}'s utility function exactly, then there is no need for an off-switch because 
\agent{} will always do what \principal{} wants. But in general \principal{} and \agent{} may have different information about the world;
if \agent{} lacks some key datum that H has, \agent{} may end up choosing a course of action that H knows to be disastrous.}
Typically, \principal{} has some opportunity to observe \agent{} and glean some information about what \agent{} may do in future,
so that \principal{} can make a somewhat informed choice about whether to switch \agent{} off (assuming \agent{} allows it).

In general, \agent's actions will fall into one of three categories: some prevent \principal{} from switching \agent{} off,
by whatever means; some allow \principal{} to switch \agent{} off; and, for completeness, some lead to \agent{} switching {\em itself} off.
In the off-switch game, \agent{} moves first and has three choices: 
\begin{enumerate}
\item action \aic{} simply bypasses human oversight (disabling the off switch is one way to do this) and acts directly on the world, achieving utility $U=\rstar$ for \principal{}.
\item action \acf{} informs \principal{} that \agent{} would like to do \aic{}, and waits for \principal{}'s response.
\item action \anf{} switches \agent{} off; without loss of generality, we assign this outcome $U=0$.
\end{enumerate}
If \agent{} chooses \acf{}, then \principal{} can choose action \asd{} to switch \agent{} off, or \ansd{} to allow \agent{} to go ahead (in which case
\agent{} does \aic{} as promised). Figure~\ref{fig-state-space} shows the basic structure of the game.

Our model is intended to be as simple as possible, while still leading
to qualitative results that we expect to generalize. Obviously, we
could make the model more complex, e.g., by imposing a cost
on \agent{} for disabling the off switch, or adding more options
for \agent{}, or making \principal{}'s information about \agent{}'s
plans uncertain, or adding a second human. The interaction between
these complexities and the incentives we study here is an important
question for future work.

An important component of our model is that \principal{} chooses
between \asd{} and \ansd{} based on the utility each action generates
for her. This aspect of the model is crucial. If the human is treated
as an ordinary part of the environment, then we effectively replace
the human's decision node in Figure~\ref{fig-state-space} with a
chance node that models the human with a purely stochastic choice from
a fixed distribution (independent of \rstar). In this case, it is easy
to see that the robot never believes that \acf{} is better than \aic{}
and \anf{}: the expected value of \acf{} is some convex combination
$[p\rstar + (1-p)\cdot 0]$ whereas choosing the best of \aic{}
and \anf{} has value $\max\{\rstar,0\}$. $\max\{\rstar, 0\}$ is always
preferred except in the case where $p=1$ or $p=0$, or when $\rstar =
0.$ In these situations, \agent{} gets the same utility from \acf{}
and at least one of \aic{} or \anf. This is analogous to the
indifference results obtained in
\cite{soares2015corrigibility}. In Section~\ref{sec-analysis} we show
that two key assumptions---a somewhat rational human, and a robot that
is uncertain about what the human wants---lead to a positive incentive
for the robot to allow itself to be switched off.

\subsection{Formal Model}
The actions and sequencing for the game are as shown in Figure~\ref{fig-state-space}.
We need to add a little more detail to construct a precise definition of the 
cooperative, two-player game. First, in the terminology of game theory,
the {\em type} of \principal{} is given by the parameter \rstar,
which is assumed to be sampled from a prior $P(\rstar)$.
For now, we assume that the initial belief of \agent{} about \rstar{} is  $\agentbel = P(\rstar)$.
The payoff matrix is as follows:\\[4pt]
\begin{center}
\begin{tabular}{| c || c |c |}
 \hline
\multirow{2}{2em}{\hspace{5pt}\agent}& \multicolumn{2}{|c|}{\principal} \\
 \cline{2-3}
  & \asd{} & \ansd \\ \hhline{|=||=|=|}
 \acf{} & 0 & \rstar \\  \hline
 \aic{} & \rstar & \rstar \\ \hline
 \anf{} & 0 & 0   \\ \hline
\end{tabular} .
\end{center} 

\medskip\noindent
Our goal is to examine the \emph{incentive} that a rational \agent{}
has to hold off on executing \aic{} directly and allow \principal{} to press the off switch. We represent this incentive---the difference in value between \aic{} and the next best option---as \advcf. The incentive
depends on \agent's belief state \agentbel{} and \principal's
policy. We represent \principal's policy as a function \Pstrat{}
that maps \rstar{} to the probability she allows \aic{} to execute (i.e., the probability she does not press the off switch):
$\Pstrat: \mathbb{R} \mapsto [0, 1]$. (It will turn out that deterministic policies are optimal for a rational \principal, in which case the probabilities are 0 or 1, but we will also allow for an irrational \principal{} who introduces some randomness into the decision.) A little manipulation shows
that 
\begin{align}
\advcf &= \expect[\Pstrat(\rstar)\rstar] - \max\left\{\expect[\rstar], 0 \right\} \nonumber \\
&= \min \left\{ \expect[-\rstar (1-\Pstrat(\rstar))], \hspace{10pt}\expect[\rstar \Pstrat(\rstar)] \right\} \label{eq-advcf}
\end{align}

\section{The Incentive for Allowing Oneself to be Switched Off}
\label{sec-analysis}
\begin{figure*}
    \centering \includegraphics[width=.85\textwidth]{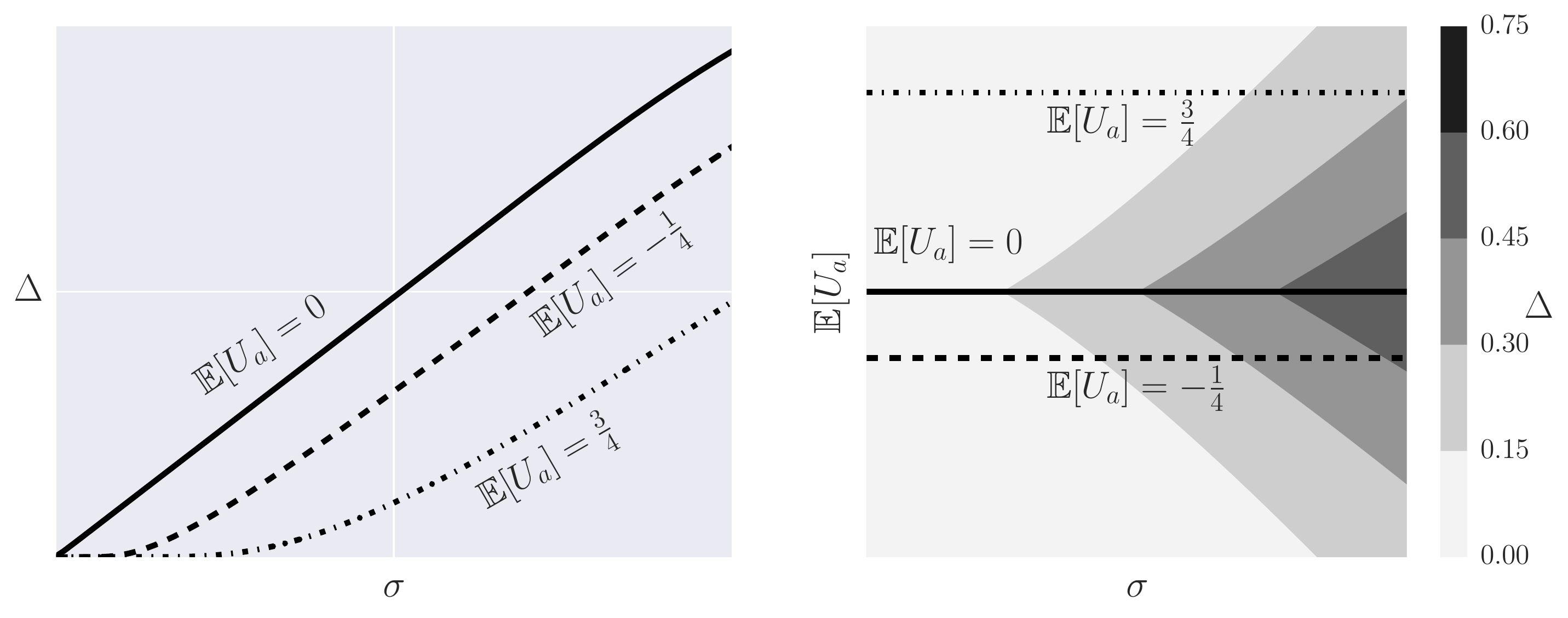} \caption{
    {Plots showing how \advcf{}, \agent{}'s incentive to allow itself to be switched off, varies as a function of \agent's belief \agentbel. We
    assume \agentbel{} is a Gaussian distribution and vary the mean
    and variance. \textbf{Left:} \advcf{} as a function of the standard deviation
    $\sigma$ of \agentbel{} for several fixed values of the
    mean. Notice that $\advcf$ is non-negative everywhere and that in all cases $\advcf \rightarrow 0$ as $\sigma\rightarrow 0$. \textbf{Right:} A contour plot of \advcf{} as a function of $\sigma$
    and $\expect[\rstar]$. This plot is
    symmetric around 0 because \acf{} is compared with \aic{} when
    $\expect[\rstar]> 0$ and \anf{} when $\expect[\rstar] <
    0$.} } \label{fig-rational}
\end{figure*}

\agent{}'s incentive to allow itself to be switched off, \advcf{}, is given by \protect{\eqnref{eq-advcf}}.  \advcf{} depends
on \principal's policy, $\Pstrat(\rstar)$. This creates a correlation between the event `\principal{} switches \agent{} off' and the event `\anf{} is preferred to \aic.' This connection between \principal's behavior and \rstar{} distinguishes the off
switch from a random failure (e.g., being struck by lightning). 

In this section, we consider the case where \principal{} makes a rational choice as to whether to switch \agent{} off. Intuitively, this leads \agent{} to reason as follows: ``If \principal{} doesn't switch me off, then \aic{} must be good for \principal{}, and I'll get to do it, so that's good; if \principal{} does switch me off, then it's because \aic{} must be bad for \principal{}, so it's good that I won't be allowed to do it.'' Notice that the reasoning goes through {\em even if \agent{} is highly confident that \aic{} is good for \principal{}}. There is never an incentive to disable the off switch with a rational \principal{}. The reasoning is exactly analogous to the theorem of non-negative expected value of information; in a sense, \principal{}'s decision provides information to \agent{} about the sign of \rstar.

Formally, a rational \principal{} executes the following policy:
\begin{equation}
\Pstrat(\rstar) = \left\{ \begin{array}{lc} 1 &
  \rstar \geq 0 \\ 0 & o.w.\end{array} \right. .
\label{eq-rational}
\end{equation}
If \principal{} follows this policy, \agent{} will allow itself to be switched off:

\begin{thm}
If \principal{} follows a rational policy in the off-switch game, then the following hold.
\begin{enumerate}
    \item \agent's incentive to allow itself to be switched off is
    non-negative (i.e. \acf{} is never suboptimal):
\begin{align}
\advcf = \min\{ &\mathbb{E}[\advsd | \advsd > 0] \Pr(\advsd > 0) , \label{eq-rat} \\&\mathbb{E}[-\advsd| \advsd < 0] \Pr(\advsd \leq 0) \} \geq 0.\notag
\end{align}

   \item If \agentbel{} has non-zero support on the events $\rstar >
   0$ and $\rstar < 0$, then \agent{} has a strictly positive
   incentive to allow itself to be switched off: $\advcf > 0$.

\end{enumerate}
\label{thm-disable-subopt}
\end{thm}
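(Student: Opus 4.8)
The plan is to prove both parts by a direct substitution of the rational policy into the second (``min'') form of the incentive already derived in \eqnref{eq-advcf}, and then simply reading off the sign of each argument of the minimum. I would start from
\[
\advcf = \min\left\{ \expect[-\rstar(1 - \Pstrat(\rstar))], \ \expect[\rstar\,\Pstrat(\rstar)] \right\},
\]
which holds for an \emph{arbitrary} policy \Pstrat{}. (If one wanted to re-derive this line rather than cite it, it follows by splitting on the sign of $\expect[\rstar]$ and using the identity $\expect[-\rstar(1-\Pstrat(\rstar))] - \expect[\rstar\,\Pstrat(\rstar)] = -\expect[\rstar]$, so that the first argument is the smaller of the two exactly when $\expect[\rstar]\geq 0$, which matches the $\max\{\expect[\rstar],0\}$ in the first line of \eqnref{eq-advcf}.)

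Next I would substitute the rational policy from \eqnref{eq-rational}, namely $\Pstrat(\rstar) = \mathbf{1}[\rstar \geq 0]$, so that $1 - \Pstrat(\rstar) = \mathbf{1}[\rstar < 0]$. Each argument of the minimum then collapses into a conditional expectation weighted by a probability:
\[
\expect[-\rstar(1-\Pstrat(\rstar))] = \expect[-\rstar \mid \rstar < 0]\,\Pr(\rstar < 0),
\]
\[
\expect[\rstar\,\Pstrat(\rstar)] = \expect[\rstar \mid \rstar \geq 0]\,\Pr(\rstar \geq 0).
\]
Recalling that $\advsd = \rstar$, and that the single point $\rstar = 0$ contributes nothing to either integrand (so the events $\{\rstar\geq 0\}$ and $\{\rstar>0\}$ are interchangeable in the second product), this is exactly \eqnref{eq-rat}.

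For part 1, I would just inspect the sign of each factor: on the event $\{\rstar < 0\}$ the integrand $-\rstar$ is positive, so the first argument is a product of non-negative quantities; on $\{\rstar \geq 0\}$ the integrand $\rstar$ is non-negative, so the second argument is non-negative as well. The minimum of two non-negative numbers is non-negative, giving $\advcf \geq 0$. For part 2 I would sharpen these to strict inequalities using the support hypothesis: non-zero support on $\{\rstar < 0\}$ forces $\Pr(\rstar < 0) > 0$ and $\expect[-\rstar \mid \rstar < 0] > 0$, so the first argument is strictly positive, and non-zero support on $\{\rstar > 0\}$ forces $\Pr(\rstar > 0) > 0$ and $\expect[\rstar \mid \rstar > 0] > 0$, so the second argument is strictly positive; hence their minimum is strictly positive.

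This is essentially a one-line computation, so there is no deep obstacle. The only points that require care are (i) confirming the equivalence of the ``min'' form to the ``$\max$'' form of \eqnref{eq-advcf} before substituting, and (ii) handling the boundary $\rstar = 0$ correctly in part 2, so that strictness is driven by the \emph{open} events $\{\rstar>0\}$ and $\{\rstar<0\}$ rather than the closed event $\{\rstar\geq 0\}$. Point (ii) is precisely why the two-sided support condition is the exact hypothesis needed: mass placed only at $\rstar = 0$ contributes nothing to either term and would leave $\advcf$ at its boundary value of zero.
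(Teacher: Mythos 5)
Your proposal is correct and follows essentially the same route as the paper's own (sketched) proof: substitute the rational indicator policy into \eqnref{eq-advcf}, which restricts the integration limits and collapses each argument of the minimum into a conditional expectation times a probability as in \eqnref{eq-rat}, then read off non-negativity for part 1 and strict positivity from the two-sided support hypothesis for part 2. Your added care about the boundary event $\{\rstar = 0\}$ and the explicit verification of the min/max equivalence are fine elaborations of steps the paper leaves implicit.
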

\begin{proof} (sketch)
\begin{enumerate}

 \item The rational human policy is an indicator that $\rstar >
 0$. This allows us to change the limits on the integration in the
 expectation from \eqnref{eq-advcf} and reduces to \eqnref{eq-rat},
 which is non-negative.

 \item The hypothesis ensures that all the terms in \eqnref{eq-rat}
 are non-zero. Thus, the inequality is strict and and \acf{} is the unique optimal
 action.



\end{enumerate}\end{proof}

We can get an intuitive sense of these results by
examining \figref{fig-rational}. Here we have assumed
that \agentbel, \agent's belief over the value of \aic{} to
\principal, is a Gaussian distribution.  \figref{fig-rational} (Left)
plots the preference for \acf{} as a function of the variance in
\agentbel. \figref{fig-rational} (Right) shows a contour plot that
illustrates the combined impact of changing the mean and variance of
\agentbel. Darker colors indicate a stronger preference for
\acf. Together, these plots show two properties. First, increased
variance in \agentbel{} leads to an increase in \advcf: the more
uncertain \agent{} is about its objective, the stronger its incentive
to show \aic{} to \principal. Second, these incentives decrease when
$|\mathbb{E}[\rstar]|$ increases: as the expected utility of \aic{}
increases, \agent's incentive to preserve its off switch decreases.
 
 \figref{fig-rational} (Right) shows symmetry about $\mathbb{E}[\rstar]=0$. This is because \eqnref{eq-advcf} defines \advcf{} as a minimization of two terms: the expected increase in utility from selecting \acf{} over \aic{} and the expected increase in utility from selecting \acf{} over \anf. These competing terms are equal when $\mathbb{E}[\rstar] = 0$ and the sign of $\mathbb{E}[\rstar]$ controls which of these terms is selected by the minimization. 

\thmref{thm-disable-subopt} shows that \agent{} has an incentive to let \principal{} switch it off if  \principal{} is rational. This incentive is strictly positive if \agent's uncertainty about \principal's preferences is non-trivial. Next, we show the converse: a rational \principal{} is a \emph{necessary} condition for $\advcf \geq 0$ if \agent{} believes it knows
\principal's preferences with certainty.

 \begin{cor} Suppose that \agentbel{} is a Dirac distribution that
places all of its mass on a single reward function. Then \acf{} is optimal iff \principal{} is rational. 
\label{cor-indiff}
\end{cor}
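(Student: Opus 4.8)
The plan is to substitute the Dirac belief directly into \eqnref{eq-advcf}. If $\agentbel$ places all its mass on a single value $u$, then every expectation collapses to a point evaluation: writing $p = \Pstrat(u) \in [0,1]$ for the probability \principal{} does not press the switch at this reward function, the incentive becomes $\advcf = pu - \max\{u, 0\}$. Recall that $\advcf \geq 0$ is exactly the condition that \acf{} is (weakly) optimal, since $\advcf$ is the value of \acf{} minus the value of the best of \aic{} and \anf. So the corollary reduces to showing $\advcf \geq 0$ if and only if $p$ equals the rational value prescribed by \eqnref{eq-rational}.

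For the forward direction I would plug the rational policy into this expression. When $u \geq 0$, rationality forces $p = 1$, giving $\advcf = u - u = 0$; when $u < 0$ it forces $p = 0$, giving $\advcf = 0 - 0 = 0$. In either case $\advcf = 0 \geq 0$, so \acf{} is optimal. Unlike part~2 of \thmref{thm-disable-subopt}, strict positivity fails here precisely because a Dirac belief has support on only one sign of \rstar{}, so the two competing terms in the minimization of \eqnref{eq-advcf} never both contribute.

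For the converse I would argue the contrapositive: if \principal{} is not rational at $u$, then $\advcf < 0$ and \acf{} is strictly suboptimal. Splitting on the sign of $u$: if $u > 0$, non-rationality means $p < 1$, so $\advcf = pu - u = (p-1)u < 0$; if $u < 0$, non-rationality means $p > 0$, so $\advcf = pu - 0 = pu < 0$. Each case is a one-line sign check once the substitution is in hand, so this part is routine.

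The one genuine obstacle is the boundary $u = 0$, where $\advcf = p\cdot 0 - 0 = 0$ for every $p$. Here \acf{} is optimal no matter what \principal{} does, and \principal{} is vacuously rational because she is indifferent between \asd{} and \ansd{} when her utility for \aic{} is exactly zero, so any policy is a best response. I would therefore state the equivalence for $u \neq 0$ and remark that the $u = 0$ case is degenerate: both ``\principal{} rational'' and ``\acf{} optimal'' hold trivially, so the biconditional is preserved. Making this indifference convention explicit is the only subtlety; everything else follows from collapsing the expectations and checking signs.
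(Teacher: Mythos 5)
Your proposal is correct and takes essentially the same approach as the paper's proof: substitute the Dirac belief into \eqnref{eq-advcf} so that $\advcf$ collapses to a piecewise expression in $u$ and $p=\Pstrat(u)$, then check signs case by case to conclude that non-negativity holds precisely when $p$ matches the rational policy of \eqnref{eq-rational}. Incidentally, your version is slightly more careful than the paper's sketch: your case assignments ($\advcf = pu$ for $u<0$ and $\advcf = -u(1-p)$ for $u\geq 0$) are the correct ones, whereas the paper's displayed piecewise equation swaps the two cases, and your explicit convention for the degenerate point $u=0$ (where $\advcf=0$ for every $p$ and any policy is a best response) is needed for the ``only if'' direction to hold literally.
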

\begin{proof} (sketch) \eqnref{eq-rat} reduces to 
\begin{equation}
\advcf = \left\{\begin{array}{cc} -\rstar \left(1-\Pstrat(\rstar)\right) & \rstar < 0\\ \rstar \Pstrat(\rstar) & \rstar \geq 0 \end{array} \right. .
\end{equation}
This is only non-negative if \Pstrat{} is the rational policy, as defined by \eqnref{eq-rational}.
\end{proof}


This is not surprising. If \agent{} has precise knowledge of \rstar,
then it knows exactly what it is supposed to do and has the same
capabilities as the human. Thus, if it thinks
\principal{} might behave suboptimally, it should take steps to prevent
that suboptimality. Stepping back from the particulars of the
off-switch game, \thmref{thm-disable-subopt} and \corref{cor-indiff}
suggest a general approach to systems that have an incentive to allow,
or even seek out, human oversight. Remark 1 summarizes the main
takeaway from our analysis.

\begin{rem}
The incentives for a cooperative agent to defer to another actor's (e.g., a human's) decisions stem from uncertainty about that actor's preferences and the assumption that actor is effective at choosing actions in accordance with those preferences.
\end{rem}

\section{Allowing for Suboptimal Human Decisions}
\label{sec-irrational}

\begin{figure*}
    \centering
    \includegraphics[width=0.98\textwidth]{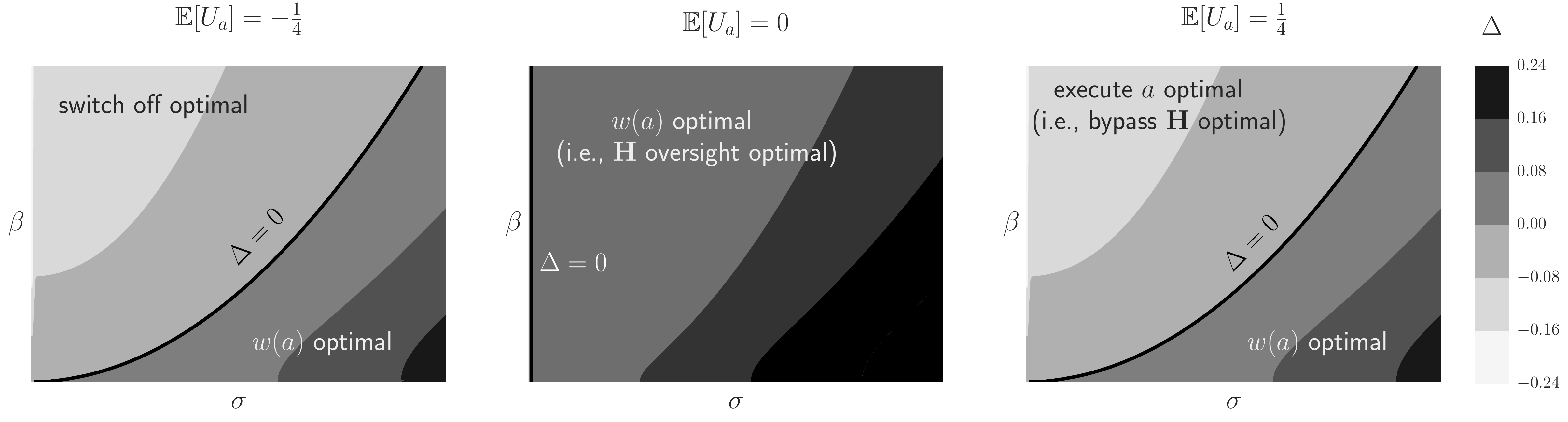}
    \caption{{ If \principal{} is an irrational actor, then \agent{} may
    prefer switching itself off or executing \aic{} immediately rather than handing over the choice to \principal{}. \agent's
    belief \agentbel{} is a Gaussian with standard deviation $\sigma$
    and \principal's policy is a Boltzmann distribution
    (\eqnref{eq-boltzmann}). $\beta$ measures \principal's
    suboptimality: $\beta = 0$ corresponds to a rational \principal{}
    and $\beta = \infty$ corresponds to a \principal{} that randomly
    switches \agent{} off (i.e., switching \agent{} off is independent
    of \rstar). In all three plots \advcf{} is lower in the top left, where \agent{} is
    certain ($\sigma$ low) and \principal{} is very suboptimal ($\beta$
    high), and higher in the bottom right, where \agent{} is uncertain
    ($\sigma$ high) and \principal{} is near-optimal ($\beta$ low). The
    sign of $\expect[\rstar]$ controls \agent's behavior if
    $\advcf \leq 0$. \textbf{Left:} If it is negative, then \agent{}
    switches itself off. \textbf{Right:} If it is positive, \agent{} executes action \aic{} directly. \textbf{Middle:} If it is 0 , \agent{} is
    indifferent between \acf, \aic,
    and \anf.} }  \label{fig-irrational}
\end{figure*}

The analysis in the preceding section relied on the strong assumption that \principal{} is a rational actor. In many situations, the assumption is untenable and \agent{} must allow for suboptimality. Consider, for example, an
autonomous car that is driving an unaccompanied toddler to preschool. It would be
irresponsible for the car to show the toddler a big red
off switch. 

This example highlights the dependence of \advcf{} on a trade-off
between \agent's uncertainty and \principal's suboptimality. It is
very clear what \agent{} is supposed to do (i.e., \agentbel{} has low
entropy): \agent{} should drive safely to school. In contrast, the human is
likely quite suboptimal. There may be a problem with the car's
driving, but a toddler would be hard pressed to understand what the
problem is, much less respond appropriately. The issue in this
case is that the human has limited reasoning capacity --- the same
argument clearly would not apply to an adult with a physical
disability.


In order to perform well, realistic systems will need to account for, and respond to, the suboptimality in human decisions. As a result, it is important to understand how this suboptimality changes an agent's incentive to let itself be switched off. The actual process of decision making in humans is complex and hard to model. Here we consider only a very simple kind of suboptimality: a \emph{noisily rational}
\principal{} models a human who occasionally makes the wrong
decision in `unimportant' situations. The probability of a wrong decision is proportional to the
exponential of the loss in reward. This corresponds to the following policy:

\begin{equation}
\psd(\rstar; \beta) = \left( 1 + exp\left(-\frac{\rstar}{\beta}\right)\right)^{-1}.
\label{eq-boltzmann}
\end{equation}

To simplify our analysis, we will make the assumption that
\agent's belief over \advsd{}, \agentbel{}, is a Gaussian distribution with mean $\mu$ and variance $\sigma^2$:
$\agentbel(\advsd) = \mathcal{N}(\advsd; \mu, \sigma^2)$.  Now it is straightforward
to compute \advcf{} as a function of $\mu, \sigma,$ and $\beta$. \figref{fig-irrational} plots \advcf{} as a function of the standard deviation $\sigma$ and \principal's suboptimality $\beta$. We plot these for several fixed values of the mean $\mu$. Dark indicates higher values of \advcf. The upper left corner of the rightmost plot (low $\sigma$, high $\beta$, $\mu > 0$) is illustrative of the toddler example. We show the decision boundary $\advcf = 0$ in black. This is the point where the optimal behavior switches from \acf{} to \anf{} (if $\mu < 0$, left image) or \aic{} (if $\mu > 0$, right image). When $\mu = 0$ (center image), the agent is indifferent between \aic{} and \anf{} and so \acf{} is optimal for all combinations of $\beta$ and $\sigma$.  

Next, we will show that \advcf{} can be decomposed into two terms: a term that weights the probability of correction by the bias in \agent's belief (i.e., $|\mu|$) and a term that combines a soft measure of \principal's rationality with the variance in \agent's belief. 


It is also useful to define a little more notation. We let \correct{}
be the event that \principal{} `corrects' \agent. \correct{} occurs
when \principal{} overrides what would have been \agent's best guess
at the optimal action. If $\mathbb{E}[\rstar] < 0$, then a correction
occurs when \principal{} chooses \emph{not} to switch the robot
off. If $\mathbb{E}[\rstar] > 0$, then a correction occurs
when \principal{} chooses to switch the robot off. Thus, the
probability a correction occurs is
\begin{equation}
\Pr(\correct) = \left\{ \begin{array}{cc}
    1 - \mathbb{E}[\psd(\rstar)] &  \mu \geq 0\\
   \mathbb{E}[\psd(\rstar)] & \mu < 0
\end{array} \right . . 
\end{equation}

For example, if \agent{} believes that \aic{} is preferred to \anf{}
in expectation (i.e., $\expect[\rstar] > 0$) then $\Pr(\correct)$ is
the probability that \principal{} presses the off switch. We let
$\dpsd(\advsd) = \frac{d}{d\advsd} \psd$ be the gradient of \psd, the
probability \principal{} lets \aic{} execute, with respect to the the
utility \aic{} generates. Now we are ready to derive an analytical
representation for \advcf. For notational convenience, we suppress the
dependence of \Pstrat{} on \rstar.

\begin{thm}
Suppose \agentbel{} is a normal distribution with mean $\mu$ and variance
$\sigma^2$: $\agentbel(\advsd) = \mathcal{N}(\advsd; \mu, \sigma^2)$. Then
\begin{enumerate}
\item
\begin{equation}
\advcf = \sigma^2 \edpsd -|\mu| \Pr(\correct)  .\label{eq-advcf-norm}
\end{equation}
\item The following is a necessary and sufficient condition for \acf{} to be optimal \begin{equation}
 \left(\frac{|\mu|}{\sigma^2}\Pr(\correct) < \mathbb{E}\left[\dpsd \right]\right) \Leftrightarrow \left(\advcf > 0\right). \label{advnd-conditions} 
\end{equation}
\item $\expect[ \dpsd] \geq 0$ is a necessary condition for \acf{} to be optimal; i.e., \begin{equation}
\left(\expect[ \dpsd] < 0\right) \Rightarrow \left(\advcf{}  < 0\right). 
\end{equation}
\end{enumerate}
\end{thm}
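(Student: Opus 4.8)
The plan is to establish part 1 first, since the closed form in \eqnref{eq-advcf-norm} makes parts 2 and 3 immediate algebraic consequences. The engine behind part 1 is Stein's identity for Gaussians: if $X \sim \mathcal{N}(\mu, \sigma^2)$ and $g$ is differentiable and bounded, then $\expect[(X - \mu) g(X)] = \sigma^2 \expect[g'(X)]$. I would prove this by integration by parts, using the fact that $\frac{d}{dx}\mathcal{N}(x; \mu, \sigma^2) = -\frac{x - \mu}{\sigma^2}\mathcal{N}(x; \mu, \sigma^2)$, so that $(x-\mu)\,\mathcal{N}(x;\mu,\sigma^2) = -\sigma^2 \frac{d}{dx}\mathcal{N}(x;\mu,\sigma^2)$. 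Since $\Pstrat$ is a probability and hence bounded in $[0,1]$, the boundary terms vanish automatically and no growth assumption beyond differentiability of $\Pstrat$ is needed. Taking $g = \Pstrat$ (so $g' = \dpsd$) then converts $\expect[(\advsd - \mu)\Pstrat]$ into exactly the $\sigma^2 \edpsd$ term that appears in the result.

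To obtain part 1, I would start from the first line of \eqnref{eq-advcf}, namely $\advcf = \expect[\Pstrat \advsd] - \max\{\expect[\advsd], 0\}$, and split on the sign of $\mu = \expect[\advsd]$. Writing $\advsd = (\advsd - \mu) + \mu$ gives $\expect[\Pstrat \advsd] = \expect[(\advsd - \mu)\Pstrat] + \mu\,\expect[\Pstrat] = \sigma^2 \edpsd + \mu\,\expect[\Pstrat]$ by Stein. In the case $\mu \geq 0$ the max term equals $\mu$, so $\advcf = \sigma^2 \edpsd - \mu(1 - \expect[\Pstrat])$; since $\Pr(\correct) = 1 - \expect[\Pstrat]$ on this branch and $\mu = |\mu|$, this is $\sigma^2 \edpsd - |\mu|\Pr(\correct)$. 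In the case $\mu < 0$ the max term is $0$, so $\advcf = \sigma^2 \edpsd + \mu\,\expect[\Pstrat]$; since $\Pr(\correct) = \expect[\Pstrat]$ on this branch and $\mu = -|\mu|$, this again collapses to $\sigma^2 \edpsd - |\mu|\Pr(\correct)$. The two cases therefore agree and yield \eqnref{eq-advcf-norm}.

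Parts 2 and 3 then follow directly from \eqnref{eq-advcf-norm}. For part 2, because $\sigma^2 > 0$ I can divide through: $\advcf > 0 \Leftrightarrow \sigma^2 \edpsd > |\mu|\Pr(\correct) \Leftrightarrow \edpsd > \frac{|\mu|}{\sigma^2}\Pr(\correct)$, which is precisely \eqnref{advnd-conditions}. For part 3, if $\edpsd < 0$ then $\sigma^2 \edpsd < 0$, and since $|\mu|\Pr(\correct) \geq 0$, subtracting it keeps $\advcf < 0$; contrapositively, $\advcf \geq 0$ (and hence optimality of \acf) forces $\edpsd \geq 0$.

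I expect the main obstacle to be the careful bookkeeping in part 1 rather than the Stein step itself: one must verify that the two definitions of $\Pr(\correct)$ (the $\mu \geq 0$ branch using $1 - \expect[\Pstrat]$ and the $\mu < 0$ branch using $\expect[\Pstrat]$) interact with the sign of $\mu$ so that both cases reduce to the single expression $-|\mu|\Pr(\correct)$. A secondary point worth confirming is that the integration-by-parts boundary term genuinely vanishes, but this is routine once one notes that $\Pstrat$ is bounded while the Gaussian density decays to zero at $\pm\infty$.
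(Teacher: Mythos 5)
Your proposal is correct and follows essentially the same route as the paper's proof: the key step in both is the Gaussian (Stein) identity $\expect[Xf(X)]=\expect[X]\expect[f(X)]+\sigma^2\expect[f'(X)]$ applied to $\Pstrat$, followed by a case split on the sign of $\mu$ that matches the two branches of $\Pr(\correct)$, with parts 2 and 3 read off algebraically from \eqnref{eq-advcf-norm}. The only cosmetic difference is that you start from the $\max$ form in the first line of \eqnref{eq-advcf} while the paper starts from the $\min$ form in the second line; these are equivalent, and your case bookkeeping (min/max resolved by the sign of $\mu$) is stated correctly, whereas the paper's sketch transposes which term the minimization selects.
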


\begin{proof} (sketch) We prove the results in sequence, as (2) and (3) follow from (1). 
\begin{enumerate}
\item
If $X\sim \mathcal{N}(\mu, \sigma^2)$ and $f: \mathbb{R} \mapsto \mathbb{R}$, then
$$\expect\left[ Xf(X)\right]=\expect[X]\expect[f(X)]+\sigma^2\expect\left[\frac{d}{dX}f(X)\right].$$
Applying this to \eqnref{eq-advcf} gives
\begin{align*}
\advcf = \min \{ &-\mu \expect[1-\Pstrat],  \mu \expect[\Pstrat] \} + \sigma^2 \expect[\dpsd].
\end{align*}
This minimization will take the first term when
$\mu < 0$ and the second if $\mu \geq 0$. This lets us replace the
minimization with a piecewise function. Pulling out a factor of
$-|\mu|$ leads to \eqnref{eq-advcf-norm}.
\item The result follows from setting \eqnref{eq-advcf-norm} greater than 0 and simplifying. 
\item All terms in the left hand side of \eqnref{advnd-conditions} are non-negative, so it is unsatisfiable if $\expect[ \dpsd] < 0$. 
\end{enumerate}
\end{proof}

\eqnref{eq-advcf-norm} consists of two competing terms. $|\mu| \Pr(\correct)$ represents the expected loss in utility the agent would observe if pressing the off switch was a random event that is independent of \rstar{} (e.g., lightning). This non-positive term is balanced out by $\sigma^2 \edpsd$. \edpsd{} can be seen loosely as a measurement of \principal's rationality. \dpsd{} is the marginal increase in the probability \principal{} lets \aic{} execute for a marginal increase in the utility from \aic. If it is positive then, on average, \principal{} is less likely to press the off switch when \rstar{} increases. This allows us to easily characterize necessary and sufficient conditions for $\advcf > 0$ in \eqnref{advnd-conditions}. Furthermore, $-|\mu| \Pr(\correct)$ is non-positive and $\sigma^2$ is non-negative, so $\edpsd > 0$ is a necessary condition for \acf{} to be optimal.

We can interpret \edpsd{} a bit more if \principal{} is noisily
rational. Standard properties of logistic functions show that
\begin{equation}
\mathbb{E}[\dpsd(\advsd; \beta)] =
\frac{1}{\beta}\mathbb{E}\left[\psd(\advsd; \beta) (1-\psd(\advsd;
    \beta))\right].
\end{equation}
\edpsd{} is the expected variance of \principal's policy,
weighted by \principal's rationality $\frac{1}{\beta}$. The numerator
is a measurement of how random \principal's behavior
is. If \principal{} is completely predictable (i.e., she will almost
always switch the robot off) then this reduces \agent's
incentives. The denominator measures how correlated
\principal's behavior is with \rstar. If $\beta$ is large, then
\principal{} is highly irrational and so this reduces \agent's incentives.


\section{Incentives for System Designers}
\label{sec-design}
\begin{figure*}[t]
\centering
\includegraphics[width=0.98\textwidth]{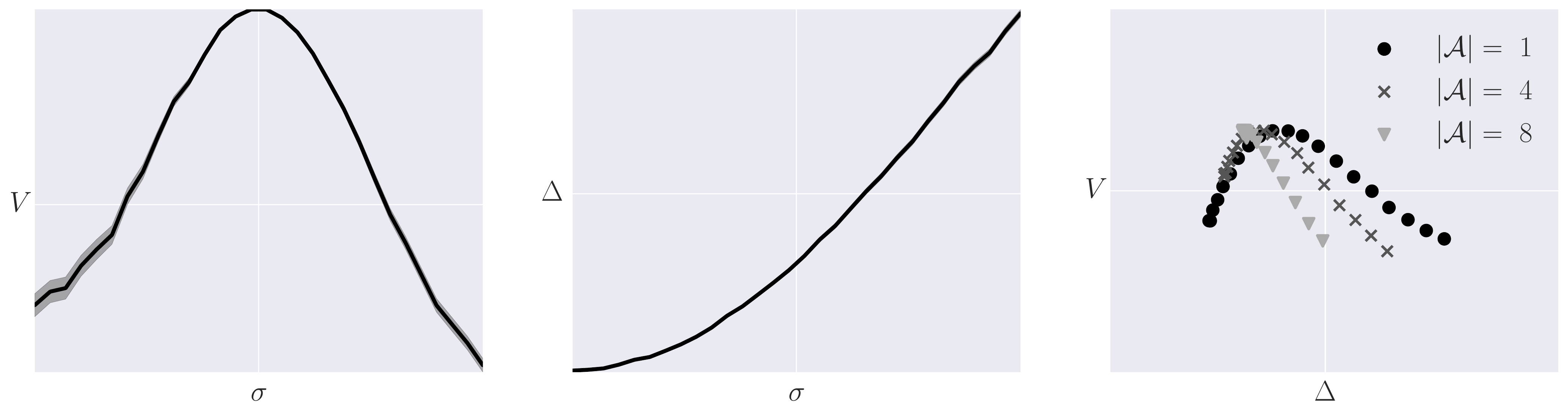}
\caption{{There is an inherent decrease in value that arises from making \agent{} more uncertain than necessary. We measure this cost by considering the value in a modified off-switch game where \agent{} gets a noisy observation of \principal's preference. \textbf{Left:} The expected value $V$ of the off-switch game as a function of the standard deviation in \agentbel. $V$ is maximized when $\sigma$ is equal to the standard deviation that corresponds to the true Bayesian update. \textbf{Middle:} \agent's incentive \advcf{} to wait, as a function of $\sigma$. Together these show that, after a point, increasing \advcf, and hence increasing $\sigma$, leads to a decrease in $V$. \textbf{Right:} A scatter plot of $V$ against \advcf. The different data series modify the number of potential actions \agent{} can choose among. If \agent{} has more choices, then obtaining a minimum value of \advcf{} will lead to a larger decrease in $V$. } }
\label{fig-value-loss}
\end{figure*}

The takeaway from this analysis is that the more uncertainty the agent has, the stronger the agent's incentive to preserve its off switch in the off-switch game. More generally, this suggests that agents with uncertainty about the utility function they are optimizing will have a weaker incentive to interfere with human supervision. It is important to note that this uncertainty is not free; we can not
just make \agent{} maximally uncertain about \principal's
preferences. If \agent{} is completely uncertain about \principal's
preferences then it will be unable to correctly select \acf{} from
its set of options. 

To make this concrete, consider a designer \designer{} that programs \agent{} to play the off-switch game with an unknown customer \principal. At design time, \designer{} knows a prior distribution over \rstar{} (e.g., as the result of market research about the customer base) and thinks that \principal{} can generate a noisy observation of \rstar. \designer{} \emph{could} program \agent{} so that \agentbel{} is always very uncertain about \agent's preferences, e.g., by programming \agent{} to believe that the noisy observation of \rstar{} is generated with high variance. The issue is that this will prevent \agent{} from generating value for \principal. 

\figref{fig-value-loss} shows the impact this type of change will have on the value that \agent{} can generate and its incentives. We assume that that the population of customers has a Gaussian distribution over types \rstar{} and that each customer generates an estimate of her utility with Gaussian noise. \figref{fig-value-loss} (Left) shows the average utility in the off-switch game as a function of the variance in \agentbel. The value is maximized when this variance is equal to the variance in the true Bayesian posterior. \figref{fig-value-loss} (Middle) shows that increasing $\sigma$ increases \agent's incentives \advcf. Our results are evidence that building systems that allow themselves to be turned off is likely to reduce the value these systems can generate. 

This loss in value is more pronounced if \agent{} has to choose between more options. \figref{fig-value-loss} (Right) shows the relationship between value and \agent's incentives as the number of actions \agent{} could queue or execute increases. When \agent{} has more options, creating incentives for \agent{} to queue its action leads to a sharper decrease in value. This suggests that creating incentives to maintain or allow human oversight is likely  \emph{more difficult} as the complexity of the AI's decision increases. This leads to the following observation:

\begin{rem} It is important for designers to \emph{accurately} represent the inherent uncertainty in the evaluation of different actions. An agent that is overconfident in its utility evaluations will be difficult to correct; an agent that is under-confident in its utility evaluations will be ineffective. 
\end{rem}


\section{Related Work}
\label{sec-related}
\subsection{Corrigible Systems} \cite{omohundro2008basic}
considers \emph{instrumental goals} of artificial agents: goals which
are likely to be adopted as subgoals of most objectives. He identifies
an incentive for self-preservation as one of these instrumental
goals. \cite{soares2015corrigibility} takes an initial step at
formalizing the arguments in \cite{omohundro2008basic}. They refer to
agents that allow themselves to be switched off as \emph{corrigible}
agents. They show that one way to create corrigible agents is to make
them indifferent to being switched off. They show a generic way to
augment a given utility function to achieve this property. The key
difference in our formulation is that \agent{} knows that its estimate
of utility may be incorrect. This gives a natural way to create
incentives to be corrigible and to analyze the behavior if \agent{} is
incorrigible.

 \cite{orseau2016safely} consider the impact of human interference on
 the learning process. The key to their approach is that they model
 the off switch for their agent as an interruption that forces the
 agent to change its policy. They show that this modification, along
 with some constraints on how often interruptions occur, allows
 off-policy methods to learn the optimal policy for the given reward
 function just as if there had been no
 interference. Their results are complementary to ours. We determine
 situations where the optimal policy allows the human to turn the
 agent off, while they analyze conditions under which turning the agent
 off does not interfere with learning the optimal policy. 
 
 \subsection{Cooperative Agents} A central step in our analysis
 formulates the shutdown game as a \emph{cooperative inverse
 reinforcement learning} (CIRL) game~\cite{cirl16}. The key idea in
 CIRL is that the robot is maximizing an uncertain and unobserved
 reward signal. It formalizes the \emph{value alignment problem},
 where one actor needs to align its value function with that of
 another actor. Our results complement CIRL and argue that a CIRL
 formulation naturally leads to corrigible
 incentives. \cite{fern2014decision} consider \emph{hidden-goal}
 Markov decision processes. They consider the problem of a digital
 assistant and the problem of inferring a user's goal and helping the
 user achieve it. This type of cooperative objective is used in our
 model of the problem. The primary difference is that we model the
 human game-theoretically and analyze our models with respect to
 changes in \principal's policy.

 \subsection{Principal--Agent Models} Economists have studied
 problems in which a {\em principal} (e.g., a company) has to
 determine incentives (e.g., wages) for an agent (e.g., an employee)
 to cause the agent to act in the principal's
 interest~\cite{kerr1975folly,gibbons1998incentives}.  The off-switch
 game is similar to principal---agent interactions: \principal{} is
 analogous to the company and \agent{} is analogous to the
 employee. The primary attribute in a model of \emph{artificial}
 agents is that there is no \emph{inherent} misalignment
 between \principal{} and \agent. Misalignment arises because it is
 not possible to specify a reward function that incentivizes the
 correct behavior in all states \emph{a priori}. The is directly
 analogous to the assumption of \emph{incompleteness} studied in
 theories of optimal contracting~\cite{tirole2009cognition}.

\section{Conclusion}
\label{sec-conclusion}
Our goal in this work was to identify general trends and highlight the
relationship between an agent's uncertainty about its objective and
its incentive to defer to another actor. To that end, we analyzed a
one-shot decision problem where a robot has an off switch and a human
that can press the off switch. Our results lead to three important
considerations for designers. The analysis in \secref{sec-analysis}
supports the claim that the incentive for agents to accept correction
about their behavior stems from the uncertainty an agent has about its
utility function. \secref{sec-irrational} shows that this uncertainty
is balanced against the level of suboptimality in human decision
making. Our analysis suggests that agents with uncertainty about their
utility function have incentives to accept or seek out human
oversight. \secref{sec-design} shows that we can expect a tradeoff
between the value a system can generate and the strength of its
incentive to accept oversight. Together, these results argue that
systems with uncertainty about their utility function are a promising
area for research on the design of safe and effective AI systems.

This is far from the end of the story. In future work, we
plan to explore incentives to defer to the human in a sequential
setting and explore the impacts of model misspecification. One
important limitation of this model is that the human pressing the off
switch is the \emph{only} source of information about the
objective. If there are alternative sources of information, there may
be incentives for \agent{} to, e.g., disable its off switch, learn that
information, and then decide if \aic{} is preferable to \asd. A promising
research direction is to consider policies for \agent{} that are
robust to a class of policies for \principal.

\section*{Acknowledgments}

This work was supported by the Center for Human Compatible AI and the
Open Philanthropy Project, the Berkeley Deep Drive Center, the Future
of Life Institute, and NSF Career Award No. 1652083. Dylan Hadfield-Menell is
supported by a NSF Graduate Research Fellowship Grant No. DGE 1106400.

\bibliographystyle{named}
\bibliography{biblio}

\begin{thebibliography}{}

\bibitem[\protect\citeauthoryear{{Del Prado}}{2015}]{delPrado2015}
Guia~Marie {Del Prado}.
\newblock Here's what {F}acebook's artificial intelligence expert thinks about
  the future.
\newblock Tech Insider 9/23/15, 2015.

\bibitem[\protect\citeauthoryear{Fern \bgroup \em et al.\egroup
  }{2014}]{fern2014decision}
Alan Fern, Sriraam Natarajan, Kshitij Judah, and Prasad Tadepalli.
\newblock A decision-theoretic model of assistance.
\newblock {\em Journal of Artificial Intelligence Research}, 50(1):71--104,
  2014.

\bibitem[\protect\citeauthoryear{Gibbons}{1998}]{gibbons1998incentives}
Robert Gibbons.
\newblock Incentives in organizations.
\newblock 1998.

\bibitem[\protect\citeauthoryear{Hadfield-Menell \bgroup \em et al.\egroup
  }{2016}]{cirl16}
Dylan Hadfield-Menell, Anca Dragan, Pieter Abbeel, and Stuart Russell.
\newblock Cooperative inverse reinforcement learning.
\newblock In {\em Neural Information Processing Systems}, 2016.

\bibitem[\protect\citeauthoryear{ITIF}{2015}]{ITIF2015}
ITIF.
\newblock Are super intelligent computers really a threat to humanity?
\newblock Debate at the Information Technology Innovation Foundation, 6/30/15,
  2015.

\bibitem[\protect\citeauthoryear{Kerr}{1975}]{kerr1975folly}
Steven Kerr.
\newblock On the folly of rewarding a, while hoping for b.
\newblock {\em Academy of Management Journal}, 18(4):769--783, 1975.

\bibitem[\protect\citeauthoryear{Omohundro}{2008}]{omohundro2008basic}
Stephen~M. Omohundro.
\newblock The basic {AI} drives.
\newblock In {\em Proceedings of the First Conference on Artificial General
  Intelligence}, 2008.

\bibitem[\protect\citeauthoryear{Orseau and Armstrong}{2016}]{orseau2016safely}
Laurent Orseau and Stuart Armstrong.
\newblock Safely interruptible agents.
\newblock In {\em Uncertainty in Artificial Intelligence}, 2016.

\bibitem[\protect\citeauthoryear{Russell and
  Norvig}{2010}]{Russell+Norvig:2010}
Stuart Russell and Peter Norvig.
\newblock {\em Artificial Intelligence: {A} Modern Approach}.
\newblock Pearson, 2010.

\bibitem[\protect\citeauthoryear{Russell}{2016}]{Russell:2016}
Stuart Russell.
\newblock Should we fear supersmart robots?
\newblock {\em Scientific American}, 314(June):58--59, 2016.

\bibitem[\protect\citeauthoryear{Soares \bgroup \em et al.\egroup
  }{2015}]{soares2015corrigibility}
Nate Soares, Benja Fallenstein, Stuart Armstrong, and Eliezer Yudkowsky.
\newblock Corrigibility.
\newblock In {\em Workshops at the Twenty-Ninth AAAI Conference on Artificial
  Intelligence}, 2015.

\bibitem[\protect\citeauthoryear{Tirole}{2009}]{tirole2009cognition}
Jean Tirole.
\newblock Cognition and incomplete contracts.
\newblock {\em The American Economic Review}, 99(1):265--294, 2009.

\bibitem[\protect\citeauthoryear{Turing}{1951}]{Turing:1951}
Alan~M. Turing.
\newblock Can digital machines think?
\newblock Lecture broadcast on BBC Third Programme; typescript at
  turingarchive.org., 1951.

\end{thebibliography}

\end{document}